\documentclass[conference]{IEEEtran}
\IEEEoverridecommandlockouts
\usepackage{cite}
\usepackage{amsmath,amssymb,amsfonts}
\usepackage{algorithm}
\usepackage{algorithmic}
\usepackage{graphicx}
\usepackage{textcomp}
\usepackage{xcolor}

\def\BibTeX{{\rm B\kern-.05em{\sc i\kern-.025em b}\kern-.08em
    T\kern-.1667em\lower.7ex\hbox{E}\kern-.125emX}}

\newcommand{\E}{\ensuremath{{\mathbb{E}}}}
\newcommand{\R}{\ensuremath{{\mathbb{R}}}}

\newtheorem{assumption}{Assumption}
\newtheorem{theorem}{Theorem}
\newtheorem{lemma}{Lemma}
\newtheorem{remark}{Remark}

\begin{document}

\title{Expectation Maximization (EM) Converges for General Agnostic Mixtures
}

\author{\IEEEauthorblockN{Avishek Ghosh}
\IEEEauthorblockA{\textit{Department of Computer Science and Engineering} \\
\textit{IIT Bombay}\\
Mumbai, India \\
email: avishek\_ghosh@iitb.ac.in}
}

\maketitle

\begin{abstract}
Mixture of linear regression is well studied in statistics and machine learning, where the data points are generated probabilistically using $k$ linear models. Algorithms like Expectation Maximization (EM) may be used to recover the ground truth regressors for this problem. Recently, in \cite{pal2022learning,ghosh_agnostic} the mixed linear regression problem is studied in the agnostic setting, where no generative model on data is assumed. Rather, given a set of data points, the objective is \emph{fit} $k$ lines by minimizing a suitable loss function. It is shown that a modification of EM, namely gradient EM converges exponentially to appropriately defined loss minimizer even in the agnostic setting.

In this paper, we study the problem of \emph{fitting} $k$ parametric functions to given set of data points. We adhere to the agnostic setup. However, instead of fitting lines equipped with quadratic loss, we consider any arbitrary parametric function fitting equipped with a strongly convex and smooth loss. This framework encompasses a large class of problems including mixed linear regression (regularized), mixed linear classifiers (mixed logistic regression, mixed Support Vector Machines) and mixed generalized linear regression. We propose and analyze gradient EM for this problem and show that with proper initialization and separation condition, the iterates of gradient EM converge exponentially to appropriately defined population loss minimizers with high probability. This shows the effectiveness of EM type algorithm which converges to \emph{optimal} solution in the non-generative setup beyond mixture of linear regression.
\end{abstract}

\begin{IEEEkeywords}
Expectation Maximization, mixture of linear regression, agnostic learning, exponential convergence.
\end{IEEEkeywords}

\section{Introduction}
\label{sec:intro}
Suppose we have $n$ data points $\{x_i,y_i\}_{i=1}^n$, where $x_i \in \R^d$ and $y_i \in \R$ independently sampled from a (joint) distribution $\mathcal{D}$. Our objective is to learn a list of $k$ functions $y = f_{\theta_j}(x)$ (parameterized by $\theta_j$) where $j\in [k]$\footnote{For a positive integer $m$, we use $[m]$ to denote the set $\{1,2,\ldots,m\}$.}. This problem is well studied in the linear setting where $f_{\theta_j}(x) = \langle x,\theta_j\rangle$ and the labels are generated by the ground truth parameters, $\Tilde{\theta}_1,\ldots,\Tilde{\theta}_k$ in the following way:
\begin{align*}
x_i \sim \mathcal{N}(0,I_d), \theta \sim \mathrm{Unif}\{\tilde{\theta_1}, \dots, \tilde{\theta_k}\},  y_i | \theta \sim \mathcal{N}(x^T\theta, \sigma^2),
\end{align*}
for $i \in [n]$. This is known as mixed linear regression (\cite{yi2014alternating, klusowski2019estimating,chaganty2013spectral}). Algorithmic convergence, rates, bounds on sample complexity in terms of $d,\sigma^2,n$ as well as prediction error estimates of $\Tilde{\theta}_1,\ldots,\Tilde{\theta}_k$ are well-known in the literature (\cite{yi2016solving,balakrishnan2017statistical,kwon2018global}).

In \cite{pal2022learning,ghosh_agnostic}, the authors consider the framework of \emph{agnostic} learning for mixed linear regression. In this framework, no generative model is assumed on $y$ and hence no ground truth parameters. The problem is viewed from a \emph{fitting} viewpoint, where the learner is given data points $\{x_i,y_i\}_{i=1}^n$, and the goal is to find $k$ optimal mappings by minimizing certain loss functions. This problem is studied in  \cite{pal2022learning,ghosh_agnostic} for the linear setting where the authors fit $k$ lines.

In this work, we consider an agnostic and general learning theoretic setup, but for problems beyond mixed linear regression. We consider the same setup of \cite{pal2022learning,ghosh_agnostic} where we do not assume any generative model for $y$ and instead of fitting $k$ lines, we let the \emph{fitting function} $f_\theta(.)$ to be any arbitrary mapping function parameterized by $\theta$.

Since in our setup, there are no ground truth parameters, we need to define the \emph{target} parameters through suitable loss functions. We denote a loss function $\ell: \R^{d \times k}\to \R$ for sample $(x,y)$ as $\ell(\theta_1, \theta_2, \dots, \theta_k; x,y)$. We define the population (or average) loss as
\begin{align*}
    \mathcal{L}(\theta_1, \theta_2, \dots, \theta_k) \equiv \E_{(x,y)\sim \mathcal{D}}\ell(\theta_1, \theta_2, \dots, \theta_k; x,y).
\end{align*}
Furthermore, we define the minimizers of the loss function as
\begin{align*}
    (\theta^\ast_1, \dots, \theta^\ast_k) \equiv \arg \min_{\theta_1,\ldots,\theta_k}  \mathcal{L}(\theta_1, \theta_2, \dots, \theta_k).
\end{align*}
Note that learning $k$ parameters make sense when we are allowed to output a list of $k$ predictions $[f_{\theta^*_j}(x)]_{j=1}^k$. In \cite{pal2022learning,ghosh_agnostic}, the authors dub a list of predictions \emph{good} is if at least one of the labels in he list is a good predictor. 

\vspace{2.5mm}
In this paper, we consider the following loss function:
\begin{align}\label{eq:softmin}
\ell(\theta_1, \dots, \theta_k; x,y) = \sum_{j=1}^k p_{\theta_1,..,\theta_k}(x,y;\theta_j) F(x,y;\theta_j)
\end{align}
\begin{align*}	\text{where} \quad p_{\theta_1,..,\theta_k}(x,y;\theta_j) = \frac{e^{-\beta F(x,y;\theta_j)}}{\sum_{l=1}^k e^{-\beta F(x,y;\theta_l)}}.
\end{align*}
The loss function is called a \emph{soft-min} loss and is typically used for analyzing mixtures (\cite{kwon2018global,balakrishnan2017statistical,daskalakis_ten,daskalakis2014faster}) since it is related (a lower bound on) to the \emph{optimal} maximum likelihood loss in the generative setup. Here $F(x,y;\theta_j)$ denotes the \emph{base} loss function on data point $(x,y)$ for the $j$-th parameter $\theta_j$ and $\beta \ge 0$ is the inverse temperature parameter. Observe that when $\beta \to \infty$,  Equation~\ref{eq:softmin} corresponds to the min-loss defined as 
\begin{align*}
    \ell(\theta_1,\ldots,\theta_k) = \min_{j \in [k]} F(x,y;\theta_j).
\end{align*}
On the other hand, if $\beta =0,$ Equation~\ref{eq:softmin} reduces to the average of the base errors, if the label is uniformly chosen.

For the problem of mixed linear regression, we take the fitting function $f_{\theta_j} = \langle x,\theta_j \rangle$ and a  standard choice for base loss is $F(x,y;\theta_j) = \left[ y - \langle x,\theta_j \rangle \right]^2$, (quadratic loss) \cite{balakrishnan2017statistical,kwon2018global}. Even in the agnostic setup, \cite{pal2022learning,ghosh_agnostic} consider the same quadratic loss.

In this work, we consider general (arbitrary) fitting function $f_{\theta_j}$ (beyond linear) and \emph{general} base losses $F(x,y;\theta_j)$ (beyond quadratic). In particular, we assume the base losses $F(x,y;\theta_j)$ to be $M$-smooth and $m$-strongly convex with respect to $\theta$. We define this more formally in the sequel. The class of smooth and strongly convex losses is much richer and includes the quadratic loss (with regularization). Hence, the problem we study is richer than  the mixed linear regression problem in the agnostic setting.

We mention that strongly convex and smooth losses are quite common in machine learning and statistics. Ranging from classical optimization theory (see \cite{Wright_Recht_2022}), distributed or Federated optimization (\cite{scaffold}), robust learning (\cite{dong_byzantine,ghosh2021communication}), meta learning (\cite{fallah2020personalized}) to modern theory of over-parameterized interpolation (\cite{ma2018power}), such assumptions are ubiquitous.

Of course, in practice, we de not have access to the \emph{population} loss function $\mathcal{L}(\theta_1, \theta_2, \dots, \theta_k)$. Instead, we work with the \emph{empirical} loss function denoted by 
\begin{align*}
    L(\theta_1,\ldots,\theta_k) = \frac{1}{n}\sum_{i=1}^n \ell (\theta_1,\ldots,\theta_k;x_i,y_i).
\end{align*}

Solving the above empirical loss may not be straightforward in many cases. In \cite{yi2014alternating}, it is showed that even for mixed linear regression (with squared base loss), the minimization problem in NP hard\footnote{The authors of \cite{yi2014alternating} use a reduction to the subset-sum problem which is known to be NP-hard.}.

Usually for mixture problems, a popular choice is to use an iterative algorithm namely Expectation Maximization (EM). EM starts with some initial estimate of the parameters and based on the observed data, it refines them by taking the expectation and the maximization step repeatedly. It is shown in \cite{balakrishnan2017statistical} and \cite{kwon2018global} that for mixture of linear regressions, a suitable initialized EM algorithm converges exponentially fast to the ground truth parameters with high probability. Later \cite{daskalakis_ten} showed that the convergence speed of EM for such problems is \emph{very fast}. All the above-mentioned papers assume a generative setup on data and minimize the empirical \emph{soft-min} loss with squared base loss.

Another variation of EM algorithm popularly used is known as gradient EM (\cite{balakrishnan2017statistical,zhu2017high,wang2020differentially,pal2022learning,ghosh_agnostic}). In this setup instead of the maximization step, we take a gradient step with a suitably chosen step size. Note that a gradient step instead of a full-blown optimization is amenable to analysis and hence in this paper also, we focus on gradient EM algorithm.

Recently, \cite{pal2022learning,ghosh_agnostic} consider the mixed linear regression problem without generative assumption on data (agnostic). They work with \emph{soft-min} (as well as \emph{min} loss) with quadratic function as base loss and use the iterative algorithm gradient EM. It is shown in \cite{pal2022learning} that if the initial estimates are within $\mathcal{O}(1/\sqrt{d})$ of the population loss minimizers $\{\theta^*_j\}_{j=1}^k$, then grad EM  converges exponentially fast. Later on, \cite{ghosh_agnostic} use the same framework and improve the initialization requirement from $\mathcal{O}(1/\sqrt{d})$ to $\Theta(1)$ while maintaining the same (exponential) rate of convergence.

Moreover, in both \cite{pal2022learning,ghosh_agnostic}, it is assumed that data (covariates) are sampled from a standard Gaussian distribution, i.e., $x_i \stackrel{i.i.d}{\sim} \mathcal{N}(0,I_d) $. While such assumptions are standard and featured in past works such as\cite{netrapalli2015phase,yi2016solving}, it is not desirable in many cases specially with data having heavy tails.

We consider the agnostic learning of general mixtures. We do not make any generative assumptions on data. Moreover, instead of fitting $k$-lines like \cite{pal2022learning,ghosh_agnostic}, we fit $k$ functions $f_{\theta_j}(.)$ parameterized by $\theta_1,\ldots,\theta_k$. If $f_{\theta_j}(x)=\langle x, \theta_j \rangle$ we get back the agnostic mixed linear regression problem. Moreover,  \cite{pal2022learning,ghosh_agnostic} consider only quadratic loss  as the \emph{base} loss function $F()$. We do not put any restriction on the fitting function $f_{\theta_j}(.)$. The only requirement is that the \emph{base} loss $F(.)$ is strongly convex and smooth. As a result, our current framework encompasses a wide variety of problems beyond the (agnostic) mixed linear regression. 

We now collect some examples (problem instances) where the mapping function $f_{\theta_j}$ may be non-linear and the base function $F(.)$ is strongly convex and smooth.

\vspace{2.5mm}

\emph{(a) Ridge Regularized Mixed Linear Regression:} This is the classical setup with $F(x,y,\theta) = [y - \langle x,\theta \rangle]^2 + \lambda \|\theta\|^2$ as (regularized) quadratic loss. Note that $F(x,y,\theta)$ is smooth and strongly convex for $\lambda > 0$.

\emph{(b) Mixed Logistic Regression:} This loss function is particularly interesting when we study a mixture of classifiers (see \cite{mixture_classifiers}). We use the log-loss as base loss given by $F(x,y;\theta) = \log (1+\exp(-yx^T \theta)$. Note that $F(x,y;\theta)$ is smooth but not strongly convex in general. However, if we restrict $x \in \mathcal{X} \subseteq \mathbb{R}^d$ with finite diameter,\footnote{For a set $\mathcal{X}$, the diameter is denoted by $\max_{x_1,x_2 \in \mathcal{X}}\|x_1-x_2\|$.} we obtain strong convexity. Moreover, adding $\ell_2$ regularization also makes the objective smooth and strobngly convex.


\emph{(c) Mixtures of Support Vector Machines:} This is also an example of mixture of classifiers (margin-based). Here we use a (squared) Hinge loss given by $F(x,y,\theta) = \max(0,1-yx^T \theta) + \frac{\lambda}{2}\|\theta\|^2$. Note that this is strongly convex and smooth for $\lambda>0$.

\emph{(d) Mixtures of Generalized Linear Models:} This is an extension to the mixture of linear regression with $F(x,y,\theta) = [y - g(x^T\theta)]^2 + \lambda \|\theta\|^2$ where $g$ is the link function. We note that $F(x,y,\theta)$ is smooth and strongly convex.

Moreover, we do not make any distributional assumptions (like standard Gaussian as in \cite{pal2022learning,ghosh_agnostic}) on  data. We only require the data points to be independent. Hence, even with practical heavy tailed data, our formulation continues to work.

We propose and analyze gradient EM algorithm for general agnostic mixtures. We assume that the initial estimates are within $\Theta(1)$ of the population loss minimizers $\{\theta^*_j\}_{j=1}^k$. With an appropriate step size, without any distributional assumption on data, we show that the gradient EM algorithm converges exponentially fast in the neighborhood of $\{\theta^*_j\}_{j=1}^k$.

\subsection{Setup}
Recall that the parameters $\theta_1^*, \ldots, \theta_k^*$ minimize the population loss function. For each $j \in [k]$, define
\[
S_j^* \;=\; \Bigl\{(x \in \mathbb{R}^d,\, y \in \mathbb{R}) \;:\; F(x,y;\theta^*_j) 
< F(x,y;\theta^*_l) \Bigr\} ,
\]
for all $ l \in [k]\setminus\{j\}$, as the set of observations for which $\theta_j^*$ provides a strictly better predictor (in terms of the loss function $F$) than all other parameters $\theta_1^*, \ldots, \theta_k^*$.

To rule out degenerate cases, we assume that , we have
\begin{align*}
    \Pr_{\mathcal D}\bigl((x,y):(x,y) \in S_j^*\bigr) \;\ge\; \pi_{\min}, \quad \text{for} \,\, j \in [k]
\end{align*}
for some constant $\pi_{\min} > 0$. Here, we focus on the joint probability measure induced by the pair $(x,y)$. Note that in the generative setup, our definitions of $S_j^*$ and $\pi_{\min}$ are analogous to those used in \cite{yi2014alternating, yi2016solving}. 

Throughout the paper, we work with the base loss functions which satisfy the following assumption.

\begin{assumption}[Strong Convexity and Smoothness]
    The loss function $F(.,.,\theta)$ is $M$-smooth and $m$-strongly convex with respect to $\theta$ almost surely.
    \label{asm:structure}
\end{assumption}
In the above section, we have discussed several problem instances where the above assumption holds. 

Moreover, since our goal is to recover $\{\theta^*_j\}_{j=1}^k$ and there are no ground truth parameters, we need a few geometric quantities. We define the \emph{misspecification} parameters $(\epsilon,\epsilon_1)$ in the following way. For all $j \in [k]$ and $(x,y) \in S^*_j$, almost surely we have 
\begin{align}
    F(x,y;\theta^*_j) \leq \epsilon, \quad \|\nabla F(x,y;\theta^*_j)\| \leq \epsilon_1.
    \label{eqn:misspec}
\end{align}
The above implies that the base loss and its gradients are small at the population loss minimizers. For mixed linear regression in the generative setup, $\epsilon = \epsilon_1 = 0$. Such misspecification conditions were also considered prior in literature in the agnostic setting (see \cite{pal2022learning,ghosh_agnostic}).

Furthermore, we also require a \emph{separation} condition for the problem to be identifiable. We assume that for all $(x,y) \in S^*_l$, almost surely
\begin{align}
    F(x,y;\theta^*_j) > \Delta,  \quad l \neq j \quad l\in [k] \,\,j\in [k]
    \label{eqn:sep}
\end{align}
for some $\Delta > 0$. In the generative setup, the separation assumption is usually given in terms of the ground truth parameters (see \cite{yi2014alternating,yi2016solving}). However in the agnostic setting as shown in \cite{pal2022learning,ghosh_agnostic}, the separation is given in terms of the \emph{base} loss functions. 

\subsection{Summary of Contributions}
We now describe the main results of the paper. We state the results informally here whereas rigorous statement can be found in Section~\ref{subsec:guarantee}. 

Our main contribution is the theoretical analysis for the gradient EM algorithm in the agnostic setup with general mapping function and general (strongly convex and smooth) base losses. In a nutshell, at the $t$-th iteration, the gradient EM algorithm uses the current estimate of $\{\theta^*_j\}_{j=1}^k$, given by $\{\theta^{(t)}_j\}_{j=1}^k$ for computing the \emph{soft-min} probabilities $p_{\theta^{(t)}_1,\ldots,\theta^{(t)}_k}(x_i,y_i;\theta^{(t)}_j)$ for all $j \in [k]$ and $i \in [n]$. After this, the algorithm takes the gradient of the \emph{soft-min} loss function and takes a step with learning rate (step size) $\gamma$.

We show that the iterates of gradient EM algorithm after $T$ iterations given by $\{\theta_j^{(T)}\}_{j=1}^k$ satisfy
\begin{align*}
    \|\theta_j^{(T)} - \theta^*_j\| \leq r^T \| \theta_j^{(0)} - \theta^*_j\| + \zeta,
\end{align*}
with high probability, where $r<1$ provided $$ \| \theta_j^{(0)} - \theta^*_j\| \leq c_{ini}\|\theta^*_j\|.$$

Here $c_{ini}$ is denoted as initialization parameter and $\zeta$ is the (small) error floor, implying that the iterates converge to a neighborhood of $\{\theta^*_j\}_{j=1}^k$ dependent on $\zeta$. Note that the error floor comes from the gradient EM update as well as the agnostic setting. In \cite{balakrishnan2017statistical}, it is shown that such an error floor is unavoidable even for the mixed linear regression problem in the generative setup. Furthermore, in the agnostic setup, \cite{pal2022learning,ghosh_agnostic} also obtain similar error floor. As far as the initialization parameter is concerned, we show that $c_{ini} = \Theta(1)$ is sufficient which is an improvement over \cite{pal2022learning}. 

One issue that arises in the agnostic setup is that the minimizers of $F(.)$ can be quite different from $\{\theta^*_j\}_{j=1}^k$. In the generative setup, these are the same. Hence the analysis become non trivial since we need to understand the behavior of $F(.)$ in the neighborhood of $\{\theta^*_j\}_{j=1}^k$. We use the structure of $F(.)$ (smoothness and strong convexity) along with the misspecification and separation condition to achieve this. 

There are many technical challenges in proving contraction for gradient EM, specially in the agnostic setting for general mixtures. In the mixed linear regression, even in the agnostic setting, we have an exact expression of \emph{base} loss and its gradient. With assumptions like Gaussian on the data, one can compute the distribution and behavior of \emph{base} loss and its gradient in the mixed linear regression setup. Previous works like \cite{yi2016solving,pal2022learning,ghosh_agnostic} leverage this crucially to obtain the convergence of gradient EM. In our case, we need to rely on the behavior of $F(.)$ only.

We first show that at the $t$-th iteration of gradient EM, for $(x_i,y_i) \in S^*_j$, the soft-min probability $$p_{\theta^*_1,\ldots,\theta^*_k}(x_i,y_i;\theta^{(t)}_j) \geq 1- \eta$$ 
with small $\eta$. Furthermore, thanks to the separation condition, we show that for $(x_i,y_i) \notin S^*_j$, $$p_{\theta^*_1,\ldots,\theta^*_k}(x_i,y_i;\theta^{(t)}_j) \leq \eta'$$ with small $\eta'$.

In order to make the analysis of gradient EM tractable, we employ re-sampling in each iteration. In particular, this removes the inter-iteration dependency. Such sample splitting techniques, while not ideal, is ubiquitous in the analysis of EM type algorithms. For example, see \cite{yi2014alternating,yi2016solving,ghosh2020alternating} for generative mixed linear regression,  \cite{pal2022learning,ghosh_agnostic} for agnostic mixed linear regression and \cite{netrapalli2015phase} for phase retrieval. One way to remove such sample split is via Leave One Out (LOO) analysis. In \cite{chen2019gradient}, this technique is used for simpler problems like phase retrieval. In the general mixture problems in agnostic setup, using such a technique is quite non trivial.

Finally we comment that studying agnostic setup posses additional challenge in the convergence study of EM. In \cite{balakrishnan2017statistical}, the authors study the population update first and then connect it with the empirical update. However, as pointed out in \cite{ghosh_agnostic}, obtaining the population update with soft-min loss is complicated. Hence, we do not take that route and analyze the empirical update only.

\subsection{Other Related Works}
Iterative algorithms like EM or its hard variant Alternating Minimization (AM) are primarily used for problems involving latent variables. Examples for AM include phase retrieval (\cite{netrapalli2015phase}), matrix sensing and completion (\cite{jain2013low}), mixed linear regression (\cite{yi2016solving}), max-affine regression (\cite{ghosh2019max}) and clustered Federated learning \cite{ghosh2020efficient}.

On the other hand, in the seminal paper by \cite{balakrishnan2017statistical}, the analysis of EM is done for a variety of problems including Gaussian mean estimation and mixed linear regression. The above works assume a suitable initialization to ensure convergence. In \cite{kwon2018global}, it is shown that for two symmetric mixture problems, EM converges even with random initialization.

In all the above works, it is assumed that the data is Gaussian, which is a crucial requirement for proving contraction in parameter space. In \cite{ghosh2020max}, the Gaussian assumption is relaxed, but is replaced with sub-Gaussian assumption with heavy ball condition.

In a parallel line of work, \cite{shen2019iterative,ghosh2020alternating} study the convergence speed of AM/EM type algorithms and show that with suitable initialization, they converge double exponentially. Later \cite{chandrasekher2021sharp} observe the same phenomenon for a larger class of algorithms for non-convex optimization.

Note that none of these works are directly comparable with our setup. These works assume a generative model on data. Recently, in the agnostic setup \cite{pal2022learning,ghosh_agnostic} study mixed linear regression and study EM/AM algorithms. Our work can be seen as a direct follow up to these above works. The discussion and comparison with respect to \cite{pal2022learning,ghosh_agnostic} were presented earlier thoroughly.

\subsection{Notation}
We collect a few notation used throughout the paper here. Note that $\|.\|$ denote the $\ell_2$ norm unless otherwise stated. Also $\langle .,. \rangle$ denote the usual ($\ell_2$) inner product. We use $c,c_1,c_2,..,C,C_1,..$ for positive universal constants, the value of which may differ from instance to instance.

\section{Main Results: Algorithm and Theoretical Guarantees}
In this section we first present the gradient EM algorithm and then study its convergence properties.
\subsection{Algorithm}
\label{subsec:algo}
The details of the algorithm is presented in Algorithm~\ref{alg:grad_em}. We use re-sampling where a fresh set of samples is used in each iteration. Suppose we run the algorithm for $T$ iterations. We split the data points in $T$ disjoint datasets each containing $n' = \lfloor \frac{n}{T} \rfloor$.

At the first step, with current estimates $\{\theta^{(t)}_j\}_{j=1}^k$, we compute the soft-min probabilities $p_{\theta^{(t)}_1,..,\theta^{(t)}_k}(x_i^{(t)},y_i^{(t)};\theta^{(t)}_j)$ using Equation~\ref{eq:softmin}. The algorithm then takes a gradient step by weighting the gradient of the (base) loss functions with this newly computed soft-min probabilities. These two steps alternate upto $T$ iteration where the algorithm outputs the final set of estimates $\{\theta^{(T)}_j\}_{j=1}^k$.
\begin{algorithm}[H]
	\caption{Gradient EM}
	\begin{algorithmic}[1]
		\STATE  \textbf{Input:} $\{x_i,y_i\}_{i=1}^n$, Step size $\gamma$ 
		\STATE \textbf{Initialization:} Initial iterate $\{\theta^{(0)}_j\}_{j=1}^k$  \\
        \STATE Split all samples into $T$ disjoint datasets $\{x_i ^{(t)},y_i^{(t)}\}_{i=1}^{n'}$ with $n' = n/T$ for all $t =0,1,\ldots,T-1$ \\
		\FOR{$t=0,1, \ldots, T-1 $}
		\STATE \underline{Compute Probabilities:}  \\
		\STATE Compute $p_{\theta^{(t)}_1,..,\theta^{(t)}_k}(x_i^{(t)},y_i^{(t)};\theta^{(t)}_j)$ for all $j \in [k]$ and $i \in [n']$
		\STATE \underline{Gradient Step:} (for all $j \in [k]$)
		\begin{align*}
			\theta^{(t+1)}_j = \theta^{(t)}_j - \frac{\gamma}{n'}\sum_{i =1}^{n'} \Tilde{p}(\theta^{(t)}_j) \nabla F(x_i^{(t)},y_i^{(t)};\theta^{(t)}_j), 
		\end{align*}
        \begin{align*}
            \text{where} \quad \Tilde{p}(\theta^{(t)}_j) = p_{\theta^{(t)}_1,..,\theta^{(t)}_k}(x_i^{(t)},y_i^{(t)};\theta^{(t)}_j)
        \end{align*}
		\ENDFOR
		\STATE \textbf{Output:} $\{\theta^{(T)}_j\}_{j=1}^k$
	\end{algorithmic}
	\label{alg:grad_em}
\end{algorithm}

\subsection{Theoretical Guarantees}
\label{subsec:guarantee}
Here, we present the convergence guarantees for Algorithm~\ref{alg:grad_em}. We leverage the assumptions like strong convexity or smoothness on the (random) base loss $F(x,y,\theta)$. 

We now present the main result of this paper, which is a contraction in parameter using EM algorithm. We give the result for one iteration only.

\vspace{2.5mm}
\begin{theorem}[Gradient EM]
\label{thm:grad_em}
Suppose that Assumption~\ref{asm:structure} holds and we run gradient EM for one iteration with parameter estimates $\{\theta_j\}_{j=1}^k$. Let $\max_{j\in [k]} \|\theta^*_j\| \leq 1$ and furthermore, we have the initialization condition
	\begin{align*}
		\|\theta_j - \theta^*_j\| \leq c_{\mathsf{ini}} \|\theta^*_j\|,
	\end{align*}
	for all $j \in [k]$, where $c_{\mathsf{ini}}$ is a small positive constant (initialization parameter). There exists universal constants $c_1$ and $c_2$ such that one iteration of the gradient EM algorithm with step size $\gamma$ yields $\{\theta^+_j\}_{j=1}^k$ satisfying
	\begin{align*}
     \|\theta^+_j - \theta^*_j\| \leq \left (1-c \gamma \pi_{\min} m(1-\eta) \right )^{1/2}\|\theta_j-\theta^*_j\| + \zeta,
\end{align*}
with probability at least $1- c_1\exp(-c_2\pi_{\min}n')$, where
\begin{align*}
    \zeta &= \gamma \epsilon_1 +(\gamma \epsilon_1 c_{ini})^{1/2} + \gamma \eta' (2+ \epsilon_1 + Mc_{ini}).
\end{align*}
Here $\eta,\eta'$ are given by
\begin{align*}
    \eta = 1-  \frac{e^{-\beta (\epsilon + \epsilon_1 c_{ini} + \frac{M}{2} c_{ini}^2)}}{1 + (k-1) e^{-\beta (\Delta - (\epsilon_1 + 2 M) c_{ini})}}
\end{align*}
\begin{align*}
    \text{and} \quad \eta' = \frac{e^{-\beta (\Delta - (\epsilon_1 + 2M)c_{ini})}}{e^{-\beta(\epsilon + \epsilon_1 c_{ini} + \frac{M}{2}c_{ini}^2) }}.
\end{align*}
\end{theorem}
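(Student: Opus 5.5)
Fix $j\in[k]$ and write the update as $\theta^+_j-\theta^*_j = (\theta_j-\theta^*_j) - \frac{\gamma}{n'}\sum_{i=1}^{n'} p_i\nabla F_i(\theta_j)$, where $p_i$ is the soft-min weight of $\theta_j$ on sample $i$ and $F_i(\cdot)=F(x_i,y_i;\cdot)$. Split the sum over samples in $S^*_j$ and samples outside $S^*_j$. The proof then has three steps: bound the soft-min weights, extract a contraction from the $S^*_j$ samples, and treat the remaining samples as error.

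\textbf{Step 1 (weights).} Let $\delta_l=\theta_l-\theta^*_l$, so $\|\delta_l\|\le c_{ini}$ because $\|\theta^*_l\|\le 1$.
\begin{itemize}
\item For $(x_i,y_i)\in S^*_j$, smoothness and the misspecification bound (\ref{eqn:misspec}) give $F_i(\theta_j)\le F_i(\theta^*_j)+\langle\nabla F_i(\theta^*_j),\delta_j\rangle+\frac{M}{2}\|\delta_j\|^2\le \epsilon+\epsilon_1c_{ini}+\frac M2c_{ini}^2$.
\item For $l\neq j$, I will lower bound $F_i(\theta_l)\ge F_i(\theta^*_l)-\|\nabla F_i(\theta^*_l)\|c_{ini}-\dots$ using separation (\ref{eqn:sep}), $F_i(\theta^*_l)>\Delta$. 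The gradient at $\theta^*_l$ is not directly controlled on $S^*_j$. I will handle this with a crude Lipschitz-type bound of size $(\epsilon_1+2M)c_{ini}$, which is the quantity appearing in the statement. This is accepted as the paper's bookkeeping, with $\|\nabla F_i(\theta^*_l)\|$ controlled via smoothness relative to the bounded $\theta^*$'s.
\item Plugging both bounds into the soft-min formula gives $p_i\ge 1-\eta$ on $S^*_j$.
\item Symmetrically, for $(x_i,y_i)\in S^*_l$ with $l\ne j$, the roles swap, and the ratio bound gives $p_i\le\eta'$.
\end{itemize}

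\textbf{Step 2 (contraction on $S^*_j$).} Expand
\[
\|\theta^+_j-\theta^*_j\|^2=\|\delta_j\|^2-2\gamma\langle\delta_j,g\rangle+\gamma^2\|g\|^2,
\]
where $g$ is the $S^*_j$ part of the averaged gradient. Write $\nabla F_i(\theta_j)=\nabla F_i(\theta^*_j)+(\nabla F_i(\theta_j)-\nabla F_i(\theta^*_j))$.
\begin{itemize}
\item Strong convexity gives $\langle\delta_j,\nabla F_i(\theta_j)-\nabla F_i(\theta^*_j)\rangle\ge m\|\delta_j\|^2$.
\item The misspecification term satisfies $|\langle\delta_j,\nabla F_i(\theta^*_j)\rangle|\le \epsilon_1 c_{ini}$.
\item Weighting by $p_i\ge1-\eta$ and using that, with probability $1-c_1\exp(-c_2\pi_{\min}n')$, at least $\pi_{\min}n'/2$ samples fall in $S^*_j$ (Chernoff on Bernoulli indicators, using sample splitting for independence), the cross term is at least $c\gamma\pi_{\min}m(1-\eta)\|\delta_j\|^2-2\gamma\epsilon_1c_{ini}$.
\item The quadratic term $\gamma^2\|g\|^2$ is bounded via smoothness by $\gamma^2M^2\|\delta_j\|^2$ plus $\epsilon_1$ terms. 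This is absorbed for $\gamma\lesssim m/M^2$.
\end{itemize}
Taking square roots with $\sqrt{a+b}\le\sqrt a+\sqrt b$ yields $(1-c\gamma\pi_{\min}m(1-\eta))^{1/2}\|\delta_j\|+(\gamma\epsilon_1c_{ini})^{1/2}+\gamma\epsilon_1$.

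\textbf{Step 3 (leakage).} For samples outside $S^*_j$, apply the triangle inequality: $\frac{\gamma}{n'}\sum p_i\|\nabla F_i(\theta_j)\|\le\gamma\eta'\max\|\nabla F_i(\theta_j)\|$. Bound $\|\nabla F_i(\theta_j)\|$ by $\|\nabla F_i(\theta^*_j)\|+Mc_{ini}$, and bound the first term by $2+\epsilon_1$ using the normalization $\|\theta^*\|\le1$. This gives $\gamma\eta'(2+\epsilon_1+Mc_{ini})$.

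Summing the three contributions gives the claimed bound with $\zeta$. The main obstacle is Step 1's lower bound on $F_i(\theta_l)$ for off-cluster points, together with the bound on $\|\nabla F_i(\theta^*_j)\|$ off $S^*_j$. The assumptions only control gradients on the own cluster, so an extra boundedness argument is needed. I would try to derive it from smoothness plus the unit-ball normalization, accepting the constants $2M$ and $2$ that appear in the statement.
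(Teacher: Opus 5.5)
Your proposal follows the paper's proof essentially step for step: the split into $S^*_j$ and its complement, the two soft-min weight bounds of Lemmas~\ref{lem:p_large} and~\ref{lem:p_small}, strong convexity plus misspecification on $S^*_j$ with a binomial lower bound on $|S^*_j|$, and $\eta'$-weighted leakage; the ``obstacle'' you flag is not one, since for $i\in S^*_j$ smoothness gives $\|\nabla F_i(\theta^*_l)\|\le\|\nabla F_i(\theta^*_j)\|+M\|\theta^*_l-\theta^*_j\|\le\epsilon_1+2M$, which is exactly how the paper's Lemma~\ref{lem:p_large} obtains $(\epsilon_1+2M)c_{ini}$ (convexity alone gives the lower bound, so no extra ``$-\dots$'' term is needed). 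Where you deviate you are more careful than the paper: your smoothness bound on $\gamma^2\|g\|^2$, with the resulting restriction $\gamma\lesssim(1-\eta)m/M^2$, is the correct treatment, and some such step-size condition is needed but missing from the statement (the paper avoids it only by asserting $\|\nabla F_i(\theta_1)\|\le\epsilon_1$, i.e.\ applying the misspecification condition at $\theta_1$ instead of $\theta^*_1$); likewise your Step~3 chain gives $\epsilon_1+M(2+c_{ini})$, so the constant $2$ in $\zeta$, which the paper gets by dropping the factor $M$ in $M\|\theta^*_1-\theta^*_j\|$, cannot be derived unless $M\le 1$.
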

\vspace{2.5mm}
The proof of Theorem~\ref{thm:grad_em} are deferred to the Appendix of the full paper. We now collect some remarks here.

\vspace{2mm}
\begin{remark}[Contraction]
    With $\gamma$ as sufficiently small constant, we see that the term $[1-c \gamma \pi_{\min} m(1-\eta)]^{1/2}$ is $<1$, which implies a contraction. Hence, the convergence speed of gradient EM is exponential.
\end{remark}
\vspace{2mm}
\begin{remark}[Error Floor]
    Note that the one step progress comes with an error floor $\zeta$. As mentioned earlier, even in the non-agnostic setting for mixed linear regression, \cite{balakrishnan2017statistical} shows that such an error floor is unavoidable. Also, in the agnostic setup, for mixed linear regression \cite{pal2022learning,ghosh_agnostic} shows a similar error floor. Hence, we can expect an error floor for general mixtures in the agnostic setting.
\end{remark}
\vspace{2mm}
\begin{remark}[Terms in $\zeta$]
    The error floor depends on the learning rate $\gamma$, misspecification parameters $(\epsilon,\epsilon_1)$, separation $\Delta$, initialization condition $c_{ini}$ as well as the smoothness and strong convexity parameters. The error floor depends with linearly with misspecification $\epsilon_1$. In the (non-agnostic) mixed linear regression setup, $\epsilon_1 = 0$. Similar effect of model misspecification is observed in \cite{pal2022learning,ghosh_agnostic}. 
\end{remark}
\vspace{2mm}
\begin{remark}[Terms $(\eta,\eta')$]
    Note that the terms $(\eta,\eta')$ are small, provided the separation is large and misspecification is small. This ensures that the error floor is small.
\end{remark}
\vspace{2mm}
\begin{remark}[No distributional assumption]
    Note that provided the expected (base) loss is strongly convex and smooth, we do not have any additional distributional assumption on data. Also, unlike previous results in the mixed linear regression setup (\cite{yi2016solving,ghosh_agnostic}), we do not require any condition on on sample complexity $n'$.
\end{remark}
\subsection{Proof Sketch}
We provide a brief proof sketch here. Using the iterate of gradient EM, and focusing on $j=1$, we have
\begin{align*}
		& \|\theta^+_1 - \theta^*_1\| \\
        &= \| \theta_1 - \theta^*_1 -  \frac{\gamma}{n'} \sum_{i=1}^{n'} p_{\theta_1,\ldots,\theta_k}(x_i,y_i;\theta_1)\nabla F(x_i,y_i;\theta_1)\|.
	\end{align*}
We break the sum into $\{i: (x_i,y_i) \in S^*_1\}$ and $\{i: (x_i,y_i) \notin S^*_1\}$. Note that if $\{i: (x_i,y_i) \in S^*_1\}$, we show that $p_{\theta_1,\ldots,\theta_k}(x_i,y_i;\theta_1) \geq 1-\eta$ for a small enough $\eta$ with high probability. Using this and the fact that $F(.)$ is $M$ smooth and $m$ strongly convex, we get a contraction term. Here we use the initialization condition crucially the behavior of $F(.)$ close to $\theta^*_1$. As a result, analyzing $\{i: (x_i,y_i) \in S^*_1\}$ gives the necessary contraction needed for the convergence of gradient EM.

We then look at $\{i: (x_i,y_i) \notin S^*_1\}$. Here, we show that $p_{\theta_1,\ldots,\theta_k}(x_i,y_i;\theta_1) \leq \eta'$, for a small enough $\eta'$ with high probability. We use the separation and misspecification condition crucially here. This results in a small error floor. Combining these two, we obtain the results in Theorem~\ref{thm:grad_em} with a contraction term and an error floor.

\subsection{Conclusion and Open Problems}
We address the problem of mixture with general losses in agnostic setting. We propose and analyze gradient EM algorithm and show that provided separation and initialization condition it converges exponentially. We believe that EM (or gradient EM) can be used in a broader context in agnostic learning. We end the paper with a few open problems. 

Can we relax the assumptions of strong convexity and smoothness on base loss functions? Can we relax the re-sampling and use techniques like Leave One Out (LOO) for agnostic mixture problems? What are some other algorithms for studying agnostic mixtures? We keep these as our future endeavors.

\bibliographystyle{unsrt}
\bibliography{ref}

\newpage
\section{Proof of the Theorem}
We focus on the iterate $\theta^+_1$ and show that the distance with $\theta^*_1$ reduces (hence contraction) over one iteration. We have
\begin{align*}
		&\|\theta^+_1 - \theta^*_1\| \\
        &= \| \theta_1 - \theta^*_1 -  \frac{\gamma}{n'} \sum_{i=1}^{n'} p_{\theta_1,\ldots,\theta_k}(x_i,y_i;\theta_1)\nabla F(x_i,y_i;\theta_1)\|
	\end{align*}
    
\emph{Shorthand:} We use the shorthand $p(\theta_1)$ to denote $p_{\theta_1,\ldots,\theta_k}(x_i,y_i;\theta_1)$, $p(\theta^*_1)$ for $p_{\theta^*_1,\ldots,\theta^*_k}(x_i,y_i;\theta^*_1)$ and $F_i(\theta_1)$ for $F(x_i,y_i;\theta_1)$ respectively. Also, we use $\{i \in S^*_1\}$ to denote $\{i:(x_i,y_i)\in S^*_1\}$. Similarly we use $|S^*_j|$ to denote $|\{i:(x_i,y_i)\in S^*_j\}|$ for $j\in [k]$. 

\vspace{2mm}
With this, we have
\begin{align*}
    &\|\theta^+_1 - \theta^*_1\|= \| \theta_1 - \theta^*_1 -  \frac{\gamma}{n'} \sum_{i=1}^{n'} p(\theta_1)\nabla F_i(\theta_1)\| \\
    & = \underbrace{\|\theta_1 - \theta^*_1 -  \frac{\gamma}{n'} \sum_{i\in S^*_1} p(\theta_1)\nabla F_i(\theta_1)\|}_{T_1} \\
    & \quad + \underbrace{\frac{\gamma}{n'}\| \sum_{i \notin S^*_1}p(\theta_1)\nabla F_i(\theta_1)\|}_{T_2}
\end{align*}
Let us look at $T_1$ first. We have
We have the following
\begin{align*}
    T_1 &= \|\theta_1 - \theta^*_1 -  \frac{\gamma}{n} \sum_{i\in S^*_1} p(\theta_1)\nabla F_i(\theta_1)\|
\end{align*}
Let $\hat{\gamma} = \gamma \frac{|S^*_1|}{n'}$ and condition on $S^*_1$. We have
\begin{align*}
    T_1^2 &= \|\theta_1 - \theta^*_1 - \hat{\gamma} \frac{1}{|S_1^*|}\sum_{i \in S^*_1}p(\theta_1) \nabla F_i(\theta_1)\|^2 \\
    & = \|\theta_1 -\theta^*_1\|^2 -\frac{2\hat{\gamma}}{|S^*_1|}\sum_{i \in S^*_1} p(\theta_1) \langle  \nabla F_i(\theta_1), \theta_1 - \theta^*_1 \rangle \\
    & + \hat{\gamma}^2\|\frac{1}{|S^*_1|}\sum_{i \in S^*_1}p(\theta_1)\nabla F_i(\theta_1)\|^2.
\end{align*}
Recall that $F_i(.)$ is $m$-strongly convex. Using that, we obtain
\begin{align*}
    &\langle  \nabla F_i(\theta_1), \theta_1 - \theta^*_1 \rangle \\
    & \geq \langle  \nabla F_i(\theta^*_1), \theta_1 - \theta^*_1 \rangle + \frac{m}{2}\|\theta_1 - \theta^*_1\|^2 \\
    & \geq \frac{m}{2}\|\theta_1 - \theta^*_1\|^2 - \|\nabla F_i(\theta^*_1)\| \| \theta_1 - \theta^*_1\| \\
    & \geq \frac{m}{2}\|\theta_1 - \theta^*_1\|^2 - \epsilon_1 c_{ini} \|\theta^*_1\|
    \end{align*}
where we use the Cauchy Schwartz inequality. For the third term in square, we have
\begin{align*}
    \| \nabla F_i(\theta_1)\| &\leq  \epsilon_1,
\end{align*}
where we have used the misspecification condition. Using $|p(\theta_1)| \leq 1$, we have
\begin{align*}
    \hat{\gamma}^2\|\frac{1}{|S^*_1|}\sum_{i \in S^*_1}p(\theta_1)\nabla F_i(\theta_1)\|^2 \leq \hat{\gamma}^2 \epsilon_1^2
\end{align*}
Substituting the above, we obtain
\begin{align*}
    T_1^2 &\leq \|\theta_1 - \theta^*_1\|^2 + \hat{\gamma}^2 \epsilon_1^2 \\
    & - \hat{\gamma}m (1-\eta) \|\theta_1 - \theta^*_1\|^2 + \hat{\gamma}\epsilon_1 c_{ini} \|\theta^*_1\| \\
    & \leq [1-\hat{\gamma}m(1-\eta)]\|\theta_1 - \theta^*_1\|^2   \\
    & + \hat{\gamma}^2 \epsilon_1^2 + \hat{\gamma} \epsilon_1 c_{ini} \|\theta^*\|\\
    & \leq [1-\hat{\gamma}m(1-\eta)]\|\theta_1 - \theta^*_1\|^2   \\
    & + \hat{\gamma}^2 \epsilon_1^2 + \hat{\gamma} \epsilon_1 c_{ini},
\end{align*}
where we use Lemma~\ref{lem:p_large} (formally given in Section~\ref{subsec:aux_lem}) to lower bound $p(\theta_1)$ uniformly. We also use the fact that $\max_{j \in [k]}\|\theta^*_j\| \leq 1$.

Hence,
\begin{align*}
    T_1 &\leq [1-\hat{\gamma}m(1-\eta)]^{1/2}\|\theta_1-\theta^*_1\| \\
    &+ \hat{\gamma}\epsilon_1 +(\hat{\gamma}\epsilon_1 c_{ini})^{1/2}
\end{align*} 

Finally, we look at $T_2$. We have
\begin{align*}
    T_2 &= \frac{\gamma}{n'}\| \sum_{i \notin S^*_1}p(\theta_1)\nabla F_i(\theta_1)\| \\
    & = \frac{\gamma}{n'} \| \sum_{j=2}^k \sum_{i \in S^*_j}p(\theta_1)\nabla F_i(\theta_1)\| \\
    & \leq \frac{\gamma \eta'}{n'} \| \sum_{j=2}^k \sum_{i \in S^*_j}\nabla F_i(\theta_1)\| \\
    &\leq \frac{\gamma \eta'}{n'} \sum_{j=2}^k \|  \sum_{i \in S^*_j}\nabla F_i(\theta_1)\|
\end{align*}
Using Lemma~\ref{lem:p_small}, we know that for $i\notin S^*_1$, we have $p(\theta_1) \leq \eta_1$.
Using, $\hat{\gamma}_j = \gamma \frac{|S^*_j|}{n'}$, we can write
\begin{align*}
    T_2 &\leq  \frac{\hat{\gamma} \eta'}{n'} \sum_{j=2}^k \| \sum_{i \in S^*_j} \nabla F_j(\theta_1) \| \\
    & \leq  \frac{\hat{\gamma} \eta'}{n'} \sum_{j=2}^k \sum_{i \in S^*_j} \| \nabla F_j(\theta_1) \|.
\end{align*}
We calculate the following:
\begin{align*}
    \| \nabla F_j(\theta_1) \|&= \|\nabla F_j(\theta^*_j) + \nabla F_j(\theta_1) - \nabla F_j(\theta^*_j)\| \\
    &\leq \|\nabla F_j(\theta^*_j) \| + \|\nabla F_j(\theta_1) - \nabla F_j(\theta^*_j)\| \\
    & \leq  \epsilon_1 + M \|\theta_1 - \theta^*_j\| \\
    & \leq \epsilon_1 + M \|\theta_1 - \theta^*_1 + \theta^*_1 - \theta^*_j\| \\
    & \leq \epsilon_1 + M c_{ini} \|\theta^*_1\| + \|\theta^*_1 - \theta^*_j\| \\
    & \leq \epsilon_1 + M c_{ini} +2.
\end{align*}
using the initialization condition and the fact that $\max_{j\in [k]} \|\theta^*_j\| \leq 1$.

We finally obtain bounds on $|S^*_j|$ for $j\in [k]$. Note that trivially $|S^*_j| \leq n'$ almost surely since there are $n'$ observations. Also, recall that we assume that for every $j \in [k]$,
\[
\Pr_{\mathcal D}\bigl((x,y) \in S_j^*\bigr) \;\ge\; \pi_{\min},
\]
for some constant $\pi_{\min} > 0$. Hence $\E |S^*_j| > \pi_{\min} n'$. Moreover, using the standard binomial concentration to show that $|S^*_j| = |i:(x_i,y_i)\in S^*_j| \gtrsim \pi_{\min} n'$ with probability at least\footnote{Write $|S^*_j|$ as sum of indicators.} $1-\exp(-c_1\pi_{\min}n')$.

\emph{Completing the proof:} We now put in the above bounds and conclude the proof. We have
\begin{align*}
    \|\theta^+ - \theta^*_1\| \leq T_1 + T_2,
\end{align*}
where
\begin{align*}
    T_1 &\leq[1-\hat{\gamma}m(1-\eta)]^{1/2}\|\theta_1-\theta^*_1\| \\
    &+ \hat{\gamma}(\epsilon_1) +(\hat{\gamma}\epsilon_1 c_{ini})^{1/2} 
\end{align*}
We use the lower and upper bounds on $|S^*_1|$ now. We obtain
\begin{align*}
    T_1 &\leq[1-c \gamma \pi_{\min} m(1-\eta)]^{1/2}\|\theta_1-\theta^*_1\| \\
    &+ \gamma (\epsilon_1) +(\gamma \epsilon_1 c_{ini})^{1/2} 
\end{align*}
with probability at least $1- \exp(-c_2\pi_{\min}n')$. 

Let us look at $T_2$. 
\begin{align*}
    T_2 &\leq  \frac{\gamma \eta'}{n'} \sum_{j=2}^k \sum_{i \in S^*_j} (2+\epsilon_1+Mc_{ini})\\
    & \leq  \gamma \eta' (2+ \epsilon_1 + Mc_{ini}),
\end{align*}
with probability at least $1- \exp(-c_2 \pi_{\min}n')$. 

Combining the above, finally we obtain
\begin{align*}
     \|\theta^+ - \theta^*_1\| \leq[1-c \gamma \pi_{\min} m(1-\eta)]^{1/2}\|\theta_1-\theta^*_1\| + \zeta,
\end{align*}
with probability at least $1 - c_1\exp(-c_2 \pi_{\min}n')$, where
\begin{align*}
    \zeta &= \gamma \epsilon_1 +(\gamma \epsilon_1 c_{ini})^{1/2} + \gamma \eta' (2+ \epsilon_1 + Mc_{ini}),
\end{align*}
which concludes the proof.

\vspace{5mm}
\subsection{Auxiliary Lemmas}
\label{subsec:aux_lem}
We now prove the auxiliary lemmas required for the proof of the main theorem. 

\begin{lemma}
\label{lem:p_large}
    For any $(x_i,y_i) \in S^*_j$, we have 
    \begin{align*}
        p_{\theta_1,\ldots,\theta_k}(x_i,y_i;\theta_j) \geq 1-\eta
    \end{align*}
   almost surely, where
    \begin{align*}
    \eta = 1- \frac{e^{-\beta (\epsilon + \epsilon_1 c_{ini} + \frac{M}{2} c_{ini}^2)}}{1 + \sum_{l \neq j} e^{-\beta (\Delta - (\epsilon_1 + 2 M) c_{ini})}}.
\end{align*}
\end{lemma}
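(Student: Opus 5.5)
The plan is to bound the numerator and denominator of the soft-min probability separately, using the initialization condition $\|\theta_l-\theta^*_l\|\le c_{ini}\|\theta^*_l\|\le c_{ini}$ (recall $\max_l\|\theta^*_l\|\le 1$). First write
\begin{align*}
p_{\theta_1,\ldots,\theta_k}(x_i,y_i;\theta_j) = \frac{1}{1+\sum_{l\neq j} e^{-\beta (F_i(\theta_l)-F_i(\theta_j))}} .
\end{align*}
The goal is then an upper bound on $F_i(\theta_j)$ and a lower bound on $F_i(\theta_l)$ for $l\neq j$. Note that the stated $\eta$ has numerator $e^{-\beta(\cdot)}\le 1$ and denominator $\ge 1$. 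So it suffices to show $e^{-\beta F_i(\theta_j)}\ge e^{-\beta(\epsilon+\epsilon_1 c_{ini}+\frac M2 c_{ini}^2)}$ in the numerator form, and $e^{-\beta F_i(\theta_l)}\le e^{-\beta(\Delta-(\epsilon_1+2M)c_{ini})}$. Combining these via $p=\frac{e^{-\beta F_i(\theta_j)}}{e^{-\beta F_i(\theta_j)}+\sum_{l\ne j} e^{-\beta F_i(\theta_l)}}$ together with $e^{-\beta F_i(\theta_j)}\le 1$ in the denominator yields exactly the displayed bound.

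\textbf{Upper bound on $F_i(\theta_j)$.} Use $M$-smoothness of $F$ around $\theta^*_j$:
\begin{align*}
F_i(\theta_j)\le F_i(\theta^*_j)+\langle\nabla F_i(\theta^*_j),\theta_j-\theta^*_j\rangle+\tfrac M2\|\theta_j-\theta^*_j\|^2 .
\end{align*}
Since $(x_i,y_i)\in S^*_j$, the misspecification condition \eqref{eqn:misspec} gives $F_i(\theta^*_j)\le\epsilon$ and $\|\nabla F_i(\theta^*_j)\|\le\epsilon_1$. Cauchy--Schwarz then gives $F_i(\theta_j)\le \epsilon+\epsilon_1 c_{ini}+\frac M2 c_{ini}^2$.

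\textbf{Lower bound on $F_i(\theta_l)$, $l\ne j$.} Here $(x_i,y_i)\in S^*_j$, so separation \eqref{eqn:sep} (with roles of $l,j$ swapped) gives $F_i(\theta^*_l)>\Delta$. By convexity (strong convexity, dropping the quadratic term),
\begin{align*}
F_i(\theta_l)\ge F_i(\theta^*_l)-\|\nabla F_i(\theta^*_l)\|\,c_{ini}.
\end{align*}
We therefore need $\|\nabla F_i(\theta^*_l)\|\le \epsilon_1+2M$. I would obtain this as in the $T_2$ computation: write $\nabla F_i(\theta^*_l)=\nabla F_i(\theta^*_j)+(\nabla F_i(\theta^*_l)-\nabla F_i(\theta^*_j))$ and use misspecification together with $M$-Lipschitz gradients, so that $\|\nabla F_i(\theta^*_l)\|\le\epsilon_1+M\|\theta^*_l-\theta^*_j\|\le \epsilon_1+2M$. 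This gives $F_i(\theta_l)\ge\Delta-(\epsilon_1+2M)c_{ini}$.

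\textbf{Main obstacle.} The delicate step is the lower bound on $F_i(\theta_l)$. At $\theta^*_l$ we have no misspecification control because the point lies outside $S^*_l$, so the gradient must be transported from $\theta^*_j$ using smoothness. This is where the constant $2M$ (from $\|\theta^*_l-\theta^*_j\|\le 2$) enters. Everything holds pointwise almost surely under Assumption~\ref{asm:structure}, so no probabilistic argument is needed.
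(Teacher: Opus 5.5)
Your proposal matches the paper's proof step for step: smoothness plus misspecification to get $F_i(\theta_j)\le \epsilon+\epsilon_1 c_{ini}+\frac M2 c_{ini}^2$, convexity plus separation to lower-bound $F_i(\theta_l)$, and transport of the gradient from $\theta^*_j$ to $\theta^*_l$ via $M$-smoothness, with $\|\theta^*_l-\theta^*_j\|\le 2$ producing the $2M$. You also combine the bounds in the same way, replacing $e^{-\beta F_i(\theta_j)}$ by $1$ in the denominator; like the paper, this last step implicitly assumes $F\ge 0$, which holds for all of the paper's example losses.
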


\emph{Proof of Lemma:} For any $(x_i,y_i) \in S^*_j$, we write 
\begin{align*}
    p_{\theta_1,..,\theta_k}(x,y;\theta_j) = \frac{e^{-\beta F(x,y;\theta_j)}}{\sum_{l=1}^k e^{-\beta F(x,y;\theta_l)}}.
\end{align*}
Let us first compute an upper bound on $F(x,y;\theta_j)$. We continue using the shorthand $F_j(\theta_j)$ for this. We have
\begin{align*}
    F_j(\theta_j) &= F_j(\theta^*_j) +  F_j(\theta_j) - F_j(\theta^*_j) \\
    & \leq F_j(\theta^*_j) + \langle \nabla F_j(\theta^*_j),\theta_j - \theta^*_j \rangle + \frac{M}{2}\|\theta_j - \theta^*_j\|^2 \\
    & \leq \epsilon + \|\nabla F_j(\theta^*_j)\| \|\theta_j - \theta^*_j\| + \frac{M}{2}\|\theta_j - \theta^*_j\|^2 \\
    & \leq \epsilon + \epsilon_1 c_{ini} + \frac{M}{2} c_{ini}^2.
\end{align*}
Here we use the smoothness of $F_j(.)$ in the second line. Moreover we use the initialization condition and assumptions on $F_j(\theta^*_j)$ and the gradient of it.

We now need to prove a lower bound on $F(x,y;\theta_l)$ for $l \neq j$. We have
\begin{align*}
    F_j(\theta_l) &= F_j(\theta^*_l) + F_j(\theta_l) - F_j(\theta^*_l) \\
    & \geq \Delta + \langle \nabla F_j(\theta^*_l),\theta_l - \theta^*_l \rangle,
\end{align*}
where we use the separation condition and the convexity of $F_j(.)$. Continuing, we obtain
\begin{align*}
    F_j(\theta_l) &\geq \Delta - \|\nabla F_j(\theta^*_l)\| \|\theta_l - \theta^*_l\| \\
    & \geq \Delta - (\|\nabla F_j(\theta^*_j)\| + \|\nabla F_j(\theta^*_l) - \nabla F_j(\theta^*_j) \|) c_{ini} \\
    & \geq \Delta - (\epsilon_1 + M \|\theta^*_l - \theta^*_j\|)c_{ini} \\
    & \geq \Delta - (\epsilon_1 + 2 M) c_{ini}.
\end{align*}
Here, we once again use the smoothness of the base loss along with the initialization condition. Substituting, we obtain
\begin{align*}
    & p_{\theta_1,..,\theta_k}(x,y;\theta_j) = \frac{e^{-\beta F(x,y;\theta_j)}}{\sum_{l=1}^k e^{-\beta F(x,y;\theta_l)}} \\
    & \geq  \frac{e^{-\beta (\epsilon + \epsilon_1 c_{ini} + \frac{M}{2} c_{ini}^2)}}{1 + \sum_{l \neq j} e^{-\beta F(x,y;\theta_l)}} \\
    & \geq  \frac{e^{-\beta (\epsilon + \epsilon_1 c_{ini} + \frac{M}{2} c_{ini}^2)}}{1 + \sum_{l \neq j} e^{-\beta (\Delta - (\epsilon_1 + 2 M) c_{ini})}} = 1-\eta,
\end{align*}
where
\begin{align*}
    \eta = 1 - \frac{e^{-\beta (\epsilon + \epsilon_1 c_{ini} + \frac{M}{2} c_{ini}^2)}}{1 + \sum_{l \neq j} e^{-\beta (\Delta - (\epsilon_1 + 2 M) c_{ini})}}
\end{align*}
The above event occurs with probability one.

\vspace{5mm}
\begin{lemma}
\label{lem:p_small}
    Suppose $(x_i,y_i) \notin S^*_j$. We have
    \begin{align*}
        p_{\theta_1,\ldots,\theta_k}(x_i,y_i;\theta_j) \leq \eta',
    \end{align*}
almost surely where 
\begin{align*}
    \eta' = \frac{e^{-\beta (\Delta - (\epsilon_1 + 2M)c_{ini})}}{e^{-\beta(\epsilon + \epsilon_1 c_{ini} + \frac{M}{2}c_{ini}^2) }}.
\end{align*}
\end{lemma}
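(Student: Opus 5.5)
Since $(x_i,y_i)\notin S^*_j$, up to ties (a null set we ignore) the point lies in $S^*_l$ for some $l\neq j$. The idea mirrors the proof of Lemma~\ref{lem:p_large}, with the roles of $j$ and $l$ exchanged. We lower bound the denominator of the soft-min probability by the single term belonging to the index $l$ that "owns" the point, and upper bound the numerator using separation. Concretely,
\begin{align*}
p_{\theta_1,\ldots,\theta_k}(x_i,y_i;\theta_j) = \frac{e^{-\beta F(x_i,y_i;\theta_j)}}{\sum_{r=1}^k e^{-\beta F(x_i,y_i;\theta_r)}} \leq \frac{e^{-\beta F(x_i,y_i;\theta_j)}}{e^{-\beta F(x_i,y_i;\theta_l)}},
\end{align*}
which holds because every term in the denominator is nonnegative.

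\textbf{Step 1 (upper bound on $F(x_i,y_i;\theta_l)$).} Because $(x_i,y_i)\in S^*_l$, the misspecification condition \eqref{eqn:misspec} gives $F(x_i,y_i;\theta^*_l)\le\epsilon$ and $\|\nabla F(x_i,y_i;\theta^*_l)\|\le\epsilon_1$. Apply $M$-smoothness around $\theta^*_l$ and Cauchy--Schwarz, as in the first display of the proof of Lemma~\ref{lem:p_large}. The initialization condition together with $\|\theta^*_l\|\le 1$ gives $\|\theta_l-\theta^*_l\|\le c_{ini}$. Hence
\[
F(x_i,y_i;\theta_l)\le \epsilon+\epsilon_1 c_{ini}+\tfrac{M}{2}c_{ini}^2 .
\]

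\textbf{Step 2 (lower bound on $F(x_i,y_i;\theta_j)$).} The separation condition \eqref{eqn:sep} for a point in $S^*_l$ gives $F(x_i,y_i;\theta^*_j)>\Delta$. Convexity then yields $F(x_i,y_i;\theta_j)\ge \Delta-\|\nabla F(x_i,y_i;\theta^*_j)\|\,c_{ini}$. To control the gradient at $\theta^*_j$, we pass through $\theta^*_l$ using the triangle inequality and smoothness:
\[
\|\nabla F(x_i,y_i;\theta^*_j)\|\le \epsilon_1+M\|\theta^*_j-\theta^*_l\|\le \epsilon_1+2M.
\]
Therefore $F(x_i,y_i;\theta_j)\ge \Delta-(\epsilon_1+2M)c_{ini}$.

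\textbf{Step 3.} Substitute both bounds into the ratio. The map $u\mapsto e^{-\beta u}$ is decreasing since $\beta\ge0$, so the ratio is at most $\eta'$ exactly as stated. Both bounds hold almost surely, so the conclusion does as well.

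The only delicate point is Step 2. There we need a gradient bound at $\theta^*_j$ for a point that is not in $S^*_j$, and the misspecification condition only controls gradients at the owning index $l$. Transferring that control from $\theta^*_l$ to $\theta^*_j$ via $M$-smoothness and $\max_j\|\theta^*_j\|\le1$ is what produces the $2M$ term.
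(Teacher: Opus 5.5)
Your proof is correct and follows essentially the same route as the paper. You keep only the owning index's term $e^{-\beta F(x_i,y_i;\theta_l)}$ in the denominator and bound $F(x_i,y_i;\theta_l)$ above via smoothness and misspecification. You bound $F(x_i,y_i;\theta_j)$ below via separation and convexity, transferring the gradient bound from $\theta^*_l$ to $\theta^*_j$ by $M$-smoothness to obtain the $(\epsilon_1+2M)c_{ini}$ term. The paper instead asserts that the $S^*_j$ partition the space, so both arguments need the tie set $\{(x,y)\notin\bigcup_l S^*_l\}$ to be negligible. You flag this explicitly, but it is an extra assumption, not a consequence of the stated hypotheses.
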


\emph{Proof of Lemma:} Since $\{S^*_j\}_{j=1}^k$ partitions $\mathbb{R}^d$, we have $(x_i,y_i) \in S^*_{j'}$ for some $j' \neq j$. We have
\begin{align*}
    p_{\theta_1,\ldots,\theta_k}(x_i,y_i;\theta_j)  =\frac{e^{-\beta F(x,y;\theta_j)}}{\sum_{l=1}^k e^{-\beta F(x,y;\theta_l)}}. 
\end{align*}
As derived in Lemma~\ref{lem:p_large}, for $j' \neq j$, we have 
\begin{align*}
     F(x,y;\theta_j) \geq \Delta - (\epsilon_1 + 2M)c_{ini}.
\end{align*}
Continuing, we have
\begin{align*}
    p_{\theta_1,\ldots,\theta_k}(x_i,y_i;\theta_j) &\leq \frac{e^{-\beta (\Delta - (\epsilon_1 + 2M)c_{ini})} }{e^{-\beta F(x,y;\theta_{j'})}+\sum_{l\neq j'} e^{-\beta F(x,y;\theta_l)}} \\
    &\leq \frac{e^{-\beta (\Delta - (\epsilon_1 + 2M)c_{ini})} }{e^{-\beta F(x,y;\theta_{j'})}+0}
\end{align*}
Using the upper-bound from Lemma~\ref{lem:p_large}, we have
\begin{align*}
    F(x,y;\theta_{j'}) \leq \epsilon + \epsilon_1 c_{ini} + \frac{M}{2}c_{ini}^2,
\end{align*}
almost surely. Hence, we obtain
\begin{align*}
    p_{\theta_1,\ldots,\theta_k}(x_i,y_i;\theta_j) \leq \frac{e^{-\beta (\Delta - (\epsilon_1 + 2M)c_{ini})}}{e^{-\beta(\epsilon + \epsilon_1 c_{ini} + \frac{M}{2}c_{ini}^2) }}.
\end{align*}
Hence,
\begin{align*}
    \eta' = \frac{e^{-\beta (\Delta - (\epsilon_1 + 2M)c_{ini} )}}{e^{-\beta(\epsilon + \epsilon_1 c_{ini} + \frac{M}{2}c_{ini}^2) }}.
\end{align*}
\end{document}